\newcommand{\reals}{\mathbb{R}}
\newtheorem{definition}{Definition}
\newtheorem{lemma}{Lemma}
\newtheorem{corollary}{Corollary}
\newtheorem{example}{Example}
\newcommand{\BlackBox}{\rule{1.5ex}{1.5ex}}  
\newenvironment{proof}{\par\noindent{\bf Proof\ }}{\hfill\BlackBox\\[2mm]}
\newcommand{\figref}[1]{Figure~\ref{#1}}
\DeclareMathOperator*{\prob}{\mathbb{P}}
\DeclareMathOperator*{\Inf}{Inf}
\newcommand{\todo}[1]{{\bf \color{red} TODO: #1 }}
\begin{document}

\title{On the Sample Complexity of End-to-end Training vs. Semantic Abstraction Training}
\date{Mobileye}

\author{Shai Shalev-Shwartz \and Amnon Shashua}

\maketitle

\begin{abstract} 
We compare the end-to-end training approach to a
  modular approach in which a system is decomposed into semantically
  meaningful components. We focus on the sample complexity aspect, in
  the regime where an extremely high accuracy is necessary, as is the
  case in autonomous driving applications. We demonstrate cases in
  which the number of training examples required by the end-to-end
  approach is exponentially larger than the number of examples
  required by the semantic abstraction approach. 
\end{abstract}

\section{Introduction}

The recent impressive empirical success of deep learning lead
researchers to attempt building complicated systems by an end-to-end
training procedure. That is, training examples, which are pairs of an
input to the system and the desired output of the system, are
generated, and a single artificial neural network is trained to mimic
this input-output relationship. An alternative approach is to first
break the system into sub modules, where each individual module has a
clear semantic meaning. Second, each individual module is constructed
either by using a machine learning approach (e.g. the module is
trained in an end-to-end manner) or by relying on manual engineering
The choice of which option to use for each module is based on
empirical success. We call this approach Semantic Abstraction.

To further demonstrate the two approaches, consider a simplified
autonomous driving system, where a car driving in a highway should
keep its lane and adapt its speed according to other vehicles. The
input to the system is the sensory input (e.g., a video stream from a
camera and a radar signal). The output is a two dimensional vector
consisting of a steering command and an acceleration/deceleration
command. The end-to-end approach will train a single deep network
whose input is the sensory input and whose output is the two
dimensional vector of control commands. The training examples are
pairs of input-output to the whole system. In contrast, the semantic
abstraction approach will break the system into several sub-modules.
E.g., one module should  detect
vehicles based on the camera, another one should detect vehicles based
on the radar, a third module should fuse the two sources of
information. Other module detect lanes, and other modules 
make high level driving policy decisions (e.g. ``follow the car in
front of you'' or ``be careful from the car on your right because it
is likely to cut into your lane''). Finally, a low level control
module provides the two dimensional vector of control commands. 
 
Both approaches are far from being new. For example, the
end-to-end approach to autonomous driving dates back to
\cite{pomerleau1989alvinn}. For a detailed description and references,
see \cite{chen2015deepdriving}. There, the end-to-end approach is
called ``behavior reflex'' and the semantic abstraction approach is
called ``mediated perception''. We note that the term ``semantic
abstraction'' is adapted from the ``temporal abstraction'' approach to
reinforcement learning \cite{sutton1999between}.

There are several advantages and disadvantages of the two approaches
(see for example
\cite{gibson2014ecological,ullman1980against,chen2015deepdriving}). In
this paper we focus on the amount of data required for the training
process and for validating the quality of the learnt system. On one
hand, the advantage of end-to-end training is that we do not need
supervision for individual sub-modules of the system. On the other
hand, as we formally show in the next section, in some situations the
overall number of examples required by the end-to-end approach might
be exponentially larger than the number of examples required by the
semantic abstraction approach.

\section{Main Results}

Consider the problem of learning a system $f$ that maps from a domain
$X$ into a domain $Y$. Let $\ell$ be a loss function that determines
failures, namely, $\ell(x,f(x)) = 1$ if $f$ fails on the input $x \in
X$ and $\ell(x,f(x)) = 0$ otherwise. 

Following the standard PAC learning model (see for example
\cite{MLbook} for a precise definition), we define the
$(\epsilon,\delta)$-\emph{sample complexity} of learning to be the
number of training examples required such that there exists a learning
algorithm that with probability of at least $1-\delta$ (over the
random choice of the training examples) outputs a system $f$ for
which $\prob[\ell(x,f(x)) = 1] \le \epsilon$. The probability is
with respect to a random choice of $x$ according to some (unknown)
distribution $D$ over the domain $X$. 

Assuming there is some perfect system (which fails with probability
$0$), classical VC theory (see again \cite{MLbook} for a reference) tells us
that the sample complexity is (ignoring constants and logarithmic
term) $\textrm{VC}(\ell \circ \mathcal{F})/\epsilon$, where
$\mathcal{F}$ is the set of systems we aim to learn, $\ell \circ
\mathcal{F} = \{x \mapsto \ell(x,f(x)) : f \in \mathcal{F}\}$, 
and ``VC'' is the VC dimension. 

One may argue that this is a worst-case bound (where the worst
situation is w.r.t. the underlying distribution $D$), and for ``real world''
distributions the number of examples can be smaller. To make a
stronger lower bound, suppose one already trained a system $f$, and
let us consider the simpler task of just distinguishing between the
two cases $\prob[\ell(x,f(x)) = 1] \le \epsilon$ or
$\prob[\ell(x,f(x)) = 1] \ge 2\epsilon$. In other words, we
are considering a ``validation'' task, determining if our system is
good enough or not. Observe that if $\prob[\ell(x,f(x)) = 1] =
2\epsilon$ then the expected number of examples we need to observe in
order to see a single failure is $1/(2\epsilon)$. Therefore, if the
number of examples is significantly smaller than $1/(2\epsilon)$ we
have no way to distinguish between the case $\prob[\ell(x,f(x)) = 1] =
2\epsilon$ and the case $\prob[\ell(x,f(x)) = 1] \le \epsilon$. It
follows that the sample complexity of this validation task is
$\Omega(1/\epsilon)$. 

In some applications, the required $\epsilon$ is extremely small. For
example, in the autonomous driving application, we expect our system
to run properly on many cars for many years, which yields an extremely
large sample complexity.  For an end-to-end system, it is unavoidable
to require a sample of size $\Omega(1/\epsilon)$ just for validating
the system, and the sample complexity of training such a system is
likely to be several orders of magnitude larger.

We next show that decomposing the problem into semantically meaningful
sub-modules may lead to a significantly smaller sample complexity. 
Let $g_1,\ldots,g_k$
be functions, where for every $i$, $g_i : X \to
\{0,1\}$ is a boolean function that indicates a failure of some sub-module of
our full system. For example, $g_1$ corresponds to a failure of a
sub-module that prevents accidents with other vehicles, $g_2$
corresponds to a failure of a sub-module that prevents hitting pedestrians, and
so on and so forth. 

Let us first focus on some individual sub-module $g_i$. To simplify
the presentation, we omit the under-script, and aim at bounding
$\prob[g]$, which is the probability of failure of this sub-module. As
mentioned before, since we would like to be in a situation that
$\prob[g]$ is extremely small (meaning that the performance of our
sub-module is very good), it follows that 
the vanilla sample complexity of bounding $\prob[g]$ grows like
$1/\prob[g]$, which is excessively big. 

However, if we make some prior assumptions, we can bound $\prob[g]$
using a much smaller number of examples. To motivate the idea, suppose
that $g$ indicates a failure of the sub-module that prevents accidents
with other vehicles. Using a semantic abstraction approach, we will
construct this sub-module by first having a module that detects
vehicles anywhere in the image, and the second module will respond to
detected vehicles only if they are in a dangerous position (e.g.,
immediately in front of us). Denote by $z_1$ the indicator of a
mis-detection of a vehicle somewhere in the image and by $z_2$ the
indicator of a vehicle being in a dangerous position. Then,
\[
\prob[g] = \prob[z_2] \, \prob[z_1 | z_2] ~.
\]
We now introduce the prior assumption that $\prob[z_1 | z_2] \le
\prob[z_1]$. That is, the probability of mis-detection of a car which
is in a dangerous position is at most the probability of mis-detection
of a car in a general position. This is a reasonable assumption
because cars in a dangerous position are close to us, and hence it is
easier to detect them. Furthermore, rare type of cars, on which we
are more likely to err, are more likely being on the side of the road
then immediately in front of us. See \figref{fig:tractor} for an illustration.

\begin{figure}
\begin{center}
\includegraphics[width=0.5\linewidth]{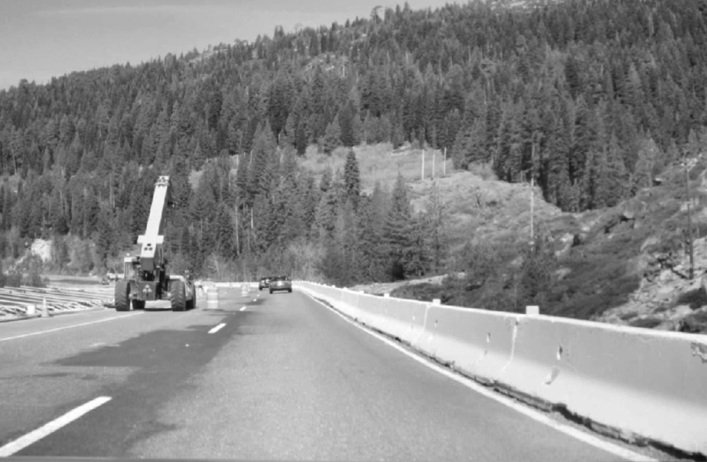}
\end{center}
\caption{The probability of observing a rare type of vehicle
  somewhere in the image is much higher than the probability of
  observing such a vehicle immediately in front of us.} \label{fig:tractor}
\end{figure}

Under the above assumption, we obtain that
\[
\prob[g] ~\le~ \prob[z_2]\, \prob[z_1] ~.
\]
We can continue the same line of thinking by having the vehicle
detection system depends both on a camera-based system and a
radar-based system. To generalize such arguments we rely on the
following definition. 
\begin{definition}[approximate independence]
We say that $z_1$ is $c$-approximately independent of $z_2$ if
\[
\prob[z_1 | z_2] \le c\, \prob[z_1] ~.
\]
\end{definition}
Relying on this definition, we have that if $g = z_1 \land \ldots
\land z_T$, and each $z_t$ is $c$-approximately independent of
$z_1,\ldots,z_{t-1}$, then 
\[
\prob[g] ~\le~ c^T \prod_{t=1}^T \prob[z_t] ~.
\]
The advantage of the above is that if $(c \prob[z_t]) \ll 1$ then
$\prob[g]$ decreases exponentially with $T$.  At the same time, each
$\prob[z_t]$ need not be excessively small to guarantee a very small
bound on $\prob[g]$. For example, if $\prob[z_t]$ is order of
$10^{-6}$, $c = 1.1$, and $T=3$, we end up with $\prob[g] \le 1.34\,\times\,
10^{-18}$. 

To bound each individual $z_t$, we rely
on the following lemma, whose proof follows from Bernstein's inequality (see Lemma
B.10 in \cite{MLbook}):
\begin{lemma}
Consider flipping $m$ times a coin, whose probability to fall on
``head'' is $p$. Let $\hat{p}$ be the fraction of times the coin fell
on ``head''. Then, with probability of at least $1-\delta$ over the
$m$ flips we have
\[
p \le \hat{p} + \sqrt{\frac{2 \hat{p} \log(1/\delta)}{m}} + \frac{4
  \log(1/\delta)}{m} ~.
\]
In particular, if $\hat{p} = 0$ we have that $p \le \frac{4
  \log(1/\delta)}{m}$. 
\end{lemma}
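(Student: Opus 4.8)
The plan is to read the lemma off from Bernstein's inequality applied to the indicator random variables $X_1,\dots,X_m\in\{0,1\}$ of the individual flips (so that $\E[X_i]=p$ and $\hat p=\frac1m\sum_i X_i$), and then to invert the resulting implicit bound. The one quantity worth isolating at the outset is the variance $\mathrm{Var}(X_i)=p(1-p)\le p$: it is this $p$ (rather than the worst-case $\tfrac14$) appearing in Bernstein's bound that eventually produces the $\sqrt{\hat p}$ factor in the statement, which is precisely what makes the lemma powerful when $\hat p$ is tiny.

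First I would apply the one-sided form of Bernstein's inequality (Lemma~B.10 in \cite{MLbook}) to the variables $-X_i$, i.e.\ to the event that the empirical mean falls below $p$. Equating the tail probability with $\delta$ and solving the resulting quadratic for the deviation gives, with probability at least $1-\delta$ and writing $L=\log(1/\delta)$,
\[
p-\hat p ~\le~ \sqrt{\frac{2\,p\,L}{m}}+\frac{2L}{3m}~.
\]
Second, I would turn this self-referential bound (with $p$ on both sides) into a clean bound in $\hat p$ by completing the square in the variable $\sqrt p$: the left-hand side minus the first right-hand term equals $\big(\sqrt p-\tfrac12\sqrt{2L/m}\big)^2-\tfrac{L}{2m}$, which yields $\sqrt p\le\tfrac12\sqrt{2L/m}+\sqrt{\hat p+\tfrac{7L}{6m}}$; squaring and using subadditivity of the square root, $\sqrt{x+y}\le\sqrt x+\sqrt y$, splits the cross term into $\sqrt{2\hat p L/m}$ plus a multiple of $L/m$. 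Collecting the constants shows the coefficient of $L/m$ comes out below $4$, which gives $p\le\hat p+\sqrt{2\hat p L/m}+4L/m$. The ``in particular'' clause is then immediate by setting $\hat p=0$.

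I do not expect a genuine conceptual obstacle; the only thing to watch is the constant bookkeeping in the inversion step, namely ensuring that the precise normalization of Bernstein's inequality one invokes yields, after squaring, a coefficient of $\log(1/\delta)/m$ that is at most $4$ (the completion-of-the-square argument above gives roughly $3.2$, so there is slack). Should only a weaker form of Bernstein be convenient, one can either optimize the split via the weighted inequality $2\sqrt{xy}\le\lambda x+\lambda^{-1}y$ with $\lambda$ tuned, or simply retain whatever absolute constant results, since only the $O(\log(1/\delta)/m)$ rate is used in the sequel.
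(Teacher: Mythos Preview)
Your proposal is correct and follows exactly the route the paper indicates: the paper does not spell out a proof but simply says the lemma ``follows from Bernstein's inequality (see Lemma~B.10 in \cite{MLbook})'', and you have filled in precisely those details---applying Bernstein with variance bounded by $p$ and then inverting the self-referential bound by completing the square in $\sqrt{p}$. Your constant bookkeeping is sound (the coefficient of $\log(1/\delta)/m$ indeed comes out near $3.2<4$).
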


Combining the above lemma with the union bound, we obtain our main result:
\begin{corollary}
Consider a function $g(x) = z_1(x) \land \ldots \land z_T(x)$, and
assume that for every $t$, $z_t$ is $c_t$ independent of
$z_1,\ldots,z_{t-1}$. Let $x_1,\ldots,x_m$ be $m$ random examples
and denote $\hat{p}_t = |\{i : z_t(x_i) = 1\}|/m$. Then, for every
$\delta \in (0,1)$, with probability of at least $1-\delta$ we have
\[
\prob[g] ~\le~ \prod_{t=1}^T \, c_t\, \left( 
\hat{p}_t + \sqrt{\frac{2 \hat{p}_t \log(T/\delta)}{m}} + \frac{4
  \log(T/\delta)}{m}
\right) ~.
\]
\end{corollary}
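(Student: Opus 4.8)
The plan is to combine the two ingredients that have already been assembled in the excerpt: (i) the chain-rule bound on $\prob[g]$ under approximate independence, which gives $\prob[g] \le \prod_{t=1}^T c_t \, \prob[z_t]$; and (ii) the single-coin concentration bound of the Lemma, applied to each coordinate $z_t$ separately. The only genuinely new step is a union bound to make the $T$ simultaneous coordinate estimates hold with total failure probability $\delta$.

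First I would fix $t$ and observe that the indicators $z_t(x_1),\ldots,z_t(x_m)$ are i.i.d.\ Bernoulli random variables with mean $p_t \eqdef \prob[z_t]$, so that $\hat p_t$ is exactly the empirical head-frequency of $m$ tosses of a coin with bias $p_t$. Applying the Lemma with $\delta$ replaced by $\delta/T$, we get that with probability at least $1-\delta/T$,
\[
p_t ~\le~ \hat p_t + \sqrt{\frac{2\hat p_t \log(T/\delta)}{m}} + \frac{4\log(T/\delta)}{m} ~\eqdef~ B_t ~.
\]
Next I would take a union bound over $t = 1,\ldots,T$: the event that $p_t \le B_t$ fails for at least one $t$ has probability at most $T \cdot (\delta/T) = \delta$, so with probability at least $1-\delta$ we have $p_t \le B_t$ for all $t$ simultaneously.

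On this good event, I invoke ingredient (i): since each $z_t$ is $c_t$-approximately independent of $z_1,\ldots,z_{t-1}$, the displayed chain bound in the excerpt gives $\prob[g] \le \prod_{t=1}^T c_t\, p_t$. Because every factor $c_t p_t$ is nonnegative and each $p_t$ is bounded above by $B_t$ on the good event, monotonicity of the product over nonnegative terms yields $\prob[g] \le \prod_{t=1}^T c_t B_t$, which is precisely the claimed inequality. This completes the argument.

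There is no real obstacle here; the statement is essentially a bookkeeping composition of results already established. The one point that deserves a word of care is the application of the chain bound: it is a statement about the \emph{true} distribution $D$ and holds deterministically (it is a consequence only of the approximate-independence hypotheses, not of the sample), so it may be applied freely once we are on the event where the empirical bounds $p_t \le B_t$ hold. The other minor point is ensuring the union bound budget is split evenly as $\delta/T$, which is what produces the $\log(T/\delta)$ terms in the final bound; no tighter allocation is attempted.
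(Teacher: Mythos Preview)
Your proposal is correct and matches the paper's approach exactly: the paper simply states that the corollary follows by combining the preceding lemma with the union bound, and you have spelled out precisely that argument --- apply the lemma to each $z_t$ with confidence $\delta/T$, union-bound over $t$, and plug the resulting per-coordinate bounds into the chain inequality $\prob[g] \le \prod_t c_t\,\prob[z_t]$.
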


The interesting fact about the above corollary is that the upper bound
on $\prob[g]$ can be much smaller than $1/m$. That is, it may be the
case that we do not see even a single failure of $g$ on our $m$ training
examples, yet we can guarantee a strong bound on the probability of failure
of $g$. 

So far we have shown how to bound the probability of a failure of a
single sub-module of our full system. We now get back to estimating
the performance of our full system, namely, bounding
$\prob[\ell(x,f(x))]$. Unlike the case of bounding $\prob[g_j(x)]$,
where we made the strong assumption that $g_j$ is an AND of several
events, for the full system we do not impose such a strong assumption on
$\ell(x,f(x))$. We only rely on the very mild assumption that the probability
of our system to fail given that all of its sub-modules work properly
is at most $1/2$.  The following lemma bounds the failure probability
of the entire system in terms of the failure probability of each
sub-module and a residual term. 
\begin{lemma}
Assume that $\prob[\lnot g_1(x) \land \ldots \land \lnot g_k(x)] \ge
0.5$, then
\[
\prob[\ell(x,f(x))] ~\le~ 2\,\sum_{j=1}^k \prob[g_j(x)]  ~+~ \prob[h(x) | \lnot g_1(x) \land \ldots \land \lnot g_k(x)] ~.
\]
\end{lemma}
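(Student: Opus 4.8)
The plan is to apply the law of total probability, conditioning on the event $A = \lnot g_1(x) \land \cdots \land \lnot g_k(x)$ that every sub-module works; once that conditioning is set up the rest is a union bound. First I would split
\[
\prob[\ell(x,f(x))] \;=\; \prob[\ell(x,f(x)) \land A] \;+\; \prob[\ell(x,f(x)) \land \lnot A] \,,
\]
and bound the second summand crudely by $\prob[\lnot A] = \prob[g_1(x) \lor \cdots \lor g_k(x)] \le \sum_{j=1}^k \prob[g_j(x)]$ via the union bound. For the first summand the idea is that on the event $A$ a failure of the full system is, by definition, a ``residual'' failure — the event $h(x)$ — so $\prob[\ell(x,f(x)) \land A] \le \prob[h(x) \land A] = \prob[h(x)\mid A]\,\prob[A] \le \prob[h(x)\mid A]$, using only $\prob[A]\le 1$. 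If in the paper's setup $h$ is a separate residual-module indicator rather than simply $\ell(x,f(x))$, the same step goes through after noting the mild structural fact that a full-system failure can occur only when either some sub-module fails or the residual module fails, so that $\{\ell(x,f(x))=1\}\cap A \subseteq \{h(x)=1\}\cap A$.

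Combining the two estimates gives $\prob[\ell(x,f(x))] \le \prob[h(x)\mid A] + \sum_j \prob[g_j(x)]$, which already implies the claimed bound since $\sum_j \prob[g_j(x)] \le 2\sum_j \prob[g_j(x)]$. The factor of $2$ in the statement is therefore slack that one naturally incurs if instead one routes the argument through $\prob[\ell(x,f(x))] \le \prob[h(x)] + \sum_j \prob[g_j(x)]$ and then bounds $\prob[h(x)] = \prob[h(x)\mid A]\prob[A] + \prob[h(x)\mid\lnot A]\prob[\lnot A] \le \prob[h(x)\mid A] + \prob[\lnot A] \le \prob[h(x)\mid A] + \sum_j \prob[g_j(x)]$, i.e. the term $\sum_j \prob[g_j(x)]$ is charged twice.

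The hypothesis $\prob[A] \ge 1/2$ is not strictly required for the inequality itself; the place it earns its keep is in making the residual term $\prob[h(x)\mid A]$ the right quantity to state the bound in: it guarantees $\prob[h(x)\mid A] \le 2\,\prob[h(x)\land A]$ and, more to the point, that the empirical version of this conditioning (the fraction of failures among the examples on which every sub-module works) has a denominator bounded away from zero, which is exactly what is needed when this lemma is fed into the earlier Corollary. So I do not expect a genuine obstacle here — the lemma is elementary — and the only real care is bookkeeping: keeping the three events $\ell(x,f(x))$, $h(x)$ and $A = \lnot g_1(x)\land\cdots\land\lnot g_k(x)$ straight, and making sure the residual term emerges as a conditional (not a joint) probability so that the statement is the useful one.
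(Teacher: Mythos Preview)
Your argument is correct, and in fact sharper and simpler than the paper's. In the paper $h(x)$ is \emph{defined} to be $\ell(x,f(x))$ (first line of the proof), so there is no separate residual indicator; your first, direct reading is the intended one. Your one-step split on $A=\bigwedge_j \lnot g_j(x)$ versus $\lnot A$ already yields
\[
\prob[h(x)] \le \prob[h(x)\mid A] + \sum_{j=1}^k \prob[g_j(x)] \,,
\]
which implies the stated bound with slack, and---as you observe---does not use the hypothesis $\prob[A]\ge 1/2$ at all.

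The paper takes a different route: it peels off the $g_j$'s one at a time via the law of total probability, arriving at
\[
\prob[h(x)] \le \sum_{j=1}^k \prob\!\left[g_j(x)\,\middle|\, \bigwedge_{i<j}\lnot g_i(x)\right] + \prob\!\left[h(x)\,\middle|\, \bigwedge_{i\le k}\lnot g_i(x)\right],
\]
and then invokes $\prob[B\mid C]\le \prob[B]/\prob[C]$ together with $\prob[\bigwedge_{i<j}\lnot g_i(x)] \ge \prob[A]\ge 1/2$ to replace each conditional term by $2\,\prob[g_j(x)]$. So in the paper's argument the factor~$2$ and the hypothesis are genuinely load-bearing, whereas in yours they are not. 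Your approach is the more economical one; the paper's iterative decomposition would become preferable only if one wanted the finer intermediate quantities $\prob[g_j\mid \bigwedge_{i<j}\lnot g_i]$ for some other purpose.
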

\begin{proof}
To simplify the notation denote by $h(x) =
\ell(x,f(x))$. Using the law of total probability we can write
\begin{align*}
  \prob[h(x)] &= \prob[g_1(x)]\, \, \prob[h(x) \, | \, g_1(x)] +
  \prob[\lnot g_1(x)]\, \, \prob[h(x) \, | \, \lnot g_1(x)] \\
  &\le \prob[g_1(x)]+
  \prob[h(x) \, | \, \lnot g_1(x)] \\
  &\le \prob[g_1(x)] + \prob[g_2(x) |
  \lnot g_1(x)] + \prob[ h(x) | \land_{i \le
    2} \lnot g_i(x)] \\
&\le \left(\sum_{j=1}^k \prob[g_j(x) | \land_{i < j} \lnot g_i(x)] \right) \,+\, \prob[h(x) | \land_{i \le
    k} \lnot g_i(x)] 
\end{align*}
Using the inequality $\prob[A|B] = \frac{\prob[A \land B]}{\prob[B]} \le \frac{\prob[A]}{\prob[B]}$ and the assumptions in the lemma, the claim follows. 
\end{proof}
To interpret the lemma, recall that we have previously shown how to
upper bound $\prob[g_j(x)]$, and we expect these terms to be
exponentially small. Therefore, their sum is also likely to be rather
small. The last term in the lemma is the probability of a system
failure given that \emph{all} of the sub-modules worked properly. In a
sense, such failures account for cases which are beyond our control
(e.g., in the context of autonomous driving, such a failure can be due
to a sudden tire explosion). We assume that the probability of
such events is close enough to zero.

To summarize, we have shown how utilizing prior knowledge and decomposing
our system into semantically meaningful sub-modules enable us to
upper bound the failure probability of our entire system by a quantity
which is much smaller than $1/m$. This is impossible for
an end-to-end system, where we must see order of $1/\epsilon$ examples
in order to ensure that the probability of failure is below
$\epsilon$.

\bibliographystyle{plain}
\bibliography{bib}

\end{document}